\providecommand{\DontPrintSemicolon}{\dontprintsemicolon}
\newcommand{\btheta}{\boldsymbol\theta}
\newcommand{\cA}{\mathcal{A}}
\newcommand{\cD}{\mathcal{D}}
\newcommand{\cH}{\mathcal{H}}
\newcommand{\cN}{\mathcal{N}}
\newcommand{\cS}{\mathcal{S}}
\newcommand{\cZ}{\mathcal{Z}}
\newcommand{\defword}[1]{\textbf{\boldmath{#1}}}
\newcommand{\ie}{{\it i.e.}~}
\newcommand{\eg}{{\it e.g.}~}
\newtheorem{lemma}{Lemma}
\newcommand{\algname}{\textsc{ARMAC}\xspace}
\newcommand{\Politex}{\textsc{Politex}\xspace}
\newcommand{\E}{\mathbb{E}}
\newcommand{\1}{\mathbb{I}}
\definecolor{darkgreen}{RGB}{0,125,0}
\definecolor{amiiPink}{RGB}{241,97,119}
\definecolor{lime}{RGB}{255,200,0}
\newcounter{mlNoteCounter}
\newcounter{rkNoteCounter}
\newcounter{rmNoteCounter}
\newcounter{jpNoteCounter}
\newcounter{agNoteCounter}
\newcounter{dmNoteCounter}
\newcounter{msNoteCounter}
\newcounter{jsNoteCounter}
\newcounter{mgaNoteCounter}
\title{The Advantage Regret-Matching Actor-Critic}
\author{%
  Audrūnas Gruslys \\ DeepMind \\ \texttt{audrunas@google.com}
  \And
  Marc Lanctot \\ DeepMind
  \And
  Rémi Munos \\ DeepMind
  \And
  Finbarr Timbers \\ DeepMind
  \And
  Martin Schmid \\ DeepMind
  \And
  Julien Perolat \\ DeepMind
  \And
  Dustin Morrill \\ University of Alberta
  \And
  Vinicius Zambaldi \\ DeepMind
  \And
  Jean-Baptiste Lespiau \\ DeepMind
  \And
  John Schultz \\ NYU
  \And
  Mohammad Gheshlaghi Azar \\ DeepMind
  \And
  Michael Bowling \\ DeepMind and University of Alberta
  \And
  Karl Tuyls \\ DeepMind
  % David S.~Hippocampus\thanks{Use footnote for providing further information
  %about author (webpage, alternative address)---\emph{not} for acknowledging
  % funding agencies.} \\
  %Department of Computer Science\\
  %Cranberry-Lemon University\\
  %Pittsburgh, PA 15213 \\
  %\texttt{hippo@cs.cranberry-lemon.edu} \\
  % examples of more authors
  % \And
  % Coauthor \\
  % Affiliation \\
  % Address \\
  % \texttt{email} \\
  % \AND
  % Coauthor \\
  % Affiliation \\
  % Address \\
  % \texttt{email} \\
  % \And
  % Coauthor \\
  % Affiliation \\
  % Address \\
  % \texttt{email} \\
  % \And
  % Coauthor \\
  % Affiliation \\
  % Address \\
  % \texttt{email} \\
}
\begin{document}

\maketitle

\begin{abstract}
Regret minimization has played a key role in online learning, equilibrium computation in games, and reinforcement learning (RL).
In this paper, we describe a general model-free RL method for no-regret learning based on repeated reconsideration of past behavior.
We propose a model-free RL algorithm, the Advantage Regret-Matching Actor-Critic (\algname):
rather than saving past state-action data, \algname saves a buffer
of {\it past policies}, replaying through them to reconstruct hindsight assessments of past behavior.
These retrospective value estimates are used to predict conditional advantages which, combined with regret matching, produces a new policy.
In particular, ARMAC learns from sampled trajectories in a centralized training setting, without requiring the application of importance sampling commonly 
used in Monte Carlo counterfactual regret (CFR) minimization; hence, it does not suffer from excessive variance in large environments.
In the single-agent setting, \algname shows an interesting form of exploration by keeping past policies intact.
In the multiagent setting, \algname in self-play approaches Nash equilibria on some partially-observable zero-sum benchmarks. We provide exploitability estimates in the significantly larger game of betting-abstracted no-limit Texas Hold'em.
\end{abstract}

\section{Introduction}

% Regret minimization story. Relationship to RL, and equilibria in games.
The notion of regret is a key concept in the design of many decision-making algorithms. 
Regret minimization drives most bandit algorithms, is often used as a metric for performance of reinforcement learning (RL) algorithms,
and for learning in games~\cite{Blum07}. When used in algorithm design, the common application is to accumulate values and/or regrets and derive
new policies based on these accumulated values. One particular approach, counterfactual regret (CFR) minimization~\cite{CFR}, has been the core algorithm behind 
super-human play in Computer Poker research~\cite{Bowling15Poker,Moravcik17DeepStack,Brown17Libratus,Brown19Pluribus}.

% Function approximation in two-player zero-sum games.
We investigate the problem of generalizing these regret minimization algorithms over large state spaces in the sequential setting using end-to-end function approximators, such as deep networks.
There have been several approaches that try to predict the regret, or otherwise, simulate the regret minimization: Regression CFR (RCFR)~\cite{Waugh15solving}, advantage regret 
minimization~\cite{Jin18ARM}, regret-based policy gradients~\cite{Srinivasan18RPG}, Deep Counterfactual Regret minimization~\cite{Brown18DeepCFR}, and 
Double Neural CFR~\cite{Li18DNCFR}. All of these approaches have focused either on the multiagent or single-agent problem
exclusively, some have used expert features, while others tree search to scale. Another common approach is based on fictitious play~\cite{Heinrich15FSP,Heinrich16NFSP,Lanctot17PSRO,Lockhart19ED}, a simple iterative self-play algorithm based on best response.
A common technique among several of these algorithms is to use reservoir sampling to maintain a
buffer that represents a uniform sample over past data, which is used to train a classifier representing the average policy. In Neural Fictitious Self-Play (NFSP), this produced competitive policies in limit Texas Hold'em~\cite{Heinrich16NFSP}, and in Deep CFR this method was shown to approach an approximate equilibrium in a large subgame of Hold'em poker.
A generalization of fictitious play, policy-space response oracles (PSRO)~\cite{Lanctot17PSRO}, stores past policies and a meta-distribution over them, replaying policies
against other policies, incrementally adding new best responses to the set, which can be seen as a population-based learning approach where the individuals are the policies and
the distribution is modified based on fitness. This approach only requires simulation of the policies and aggregating data; as a result,
it was able to scale to a very large real-time strategy game~\cite{Vinyals19AlphaStar}.
In this paper, we describe an approximate form of CFR in a training regime that we call {\it retrospective policy improvement}. 
Similar to PSRO, our method stores past policies. However, it does not store meta-distributions or reward tables, nor do the policies have to be approximate best responses, which can be costly to compute or learn.
Instead, the policies are snapshots of those used in the past, which are retrospectively replayed to predict a conditional advantage, which used in a regret matching algorithm produces the same policy as CFR would do.
In the single-agent setting, \algname is related to \Politex~\cite{Politex}, except that it is based on regret-matching~\cite{Hart00} and 
it predicts average quantities rather than explicitly summing over all the experts to obtain the policy. 
In the multiagent setting, it is a sample-based, model-free variant of RCFR with one important property: it
uses trajectory samples to estimate quantities {\it without requiring importance sampling} as in standard Monte Carlo CFR~\cite{Lanctot09mccfr}, hence it does not suffer from excessive variance in large environments.
This is achieved by using critics (value estimates) of past policies that are trained off-policy using standard policy evaluation techniques.
In particular, we introduce a novel training regime that estimates a conditional advantage $W_i(s,a)$, which is the cumulative counterfactual regret $R_i(s,a)$, scaled by factor $B(s)$ that depends on the information state $s$ only; hence, using regret-matching over this quantity yields the policy that CFR would compute when applying regret-matching to the same (unscaled) regret values.
By doing this entirely from sampled trajectories, the algorithm is model-free and can be done using any black-box simulator of the environment; hence, \algname inherits the scaling potential of PSRO without requiring a best-response training regime, driven instead by regret minimization.

\section{Background}

In this section, we describe the necessary terminology. Since we want to include the (partially-observable) multiagent case and we build on
algorithms from regret minimization
%\mganote{comma missing, also the sentence is too long, hard to follow
we use extensive-form games notations~\cite{ShohamLB09}.
A single-player game represents the single-agent case where histories are aggregated appropriately based on the Markov property.

%Define histories $h$, terminal histories $z$, states $s$, actions $a$, policies $\pi$, reach probabilities $\eta$,
%counterfactual values $q_i^c(\pi, s, a)$ and $v_i^c(\pi, s)$, Nash, $\epsilon$-Nash, etc.
A \defword{game} is a tuple $(\cN, \cA, \cS, \cH, \cZ, u, \tau)$, where $\cN = \{ 1, 2, \cdots, n\}$ is the set of players.
By convention we use $i \in \cN$ to refer to a player, and $-i$ for the other players $(\cN - \{ i \})$.
There is a special player $c$ called {\it chance} (or {\it nature}) that plays with a fixed stochastic strategy (chance's fixed strategy determines the transition function).
$\cA$ is a finite set of actions.
Every game starts in an initial state, and players sequentially take actions leading to histories of actions $h \in \cH$.
Terminal histories, $z \in \cZ \subset \cH$, are those which end the episode.
The utility function $u_i(z)$ denotes the player $i's$ return over episode $z$.
The set of states $\cS$ is a partition of $\cH$ where histories are grouped into \defword{information states}
$s = \{ h, h', \ldots \}$ such that the player to play at $s$, $\tau(s)$, cannot distinguish among the possible histories (world states)
due to private information only known by other players \footnote{Information state is the belief about the world that a given player can infer based on her limited observations and may correspond to many possible histories (world states)}.
Let $\Delta(X)$ represent all distributions over $X$: each player's (agent's) goal is to learn a policy $\pi_i : \cS_i  \rightarrow \Delta(\cA)$, where $\cS_i = \{s~|~ s \in \cS, \tau(s) = i\}$.
For some state $s$, we denote $\cA(s) \subseteq \cA$ as the legal actions at state $s$, and all valid state policies $\pi(s)$ assign probability 0 to illegal actions $a \not\in \cA(s)$.

Let $\pi$ denote a joint policy. Define the state-value $v_{\pi,i}(s)$ as the expected (undiscounted) return for player $i$
given that state $s$ is reached and all players follow $\pi$. Let $q_{\pi,i}$ be defined similarly except also
conditioned on player $\tau(s)$ taking action $a$ at $s$. Formally, $v_{\pi,i}(s) = \sum_{(h,z) \in \cZ(s)} \eta^\pi(h | s) \eta^\pi(h,z) u_i(z)$,
where $\cZ(s)$ are all terminal histories paired with their prefixes that pass through $s$, $\eta^\pi(h | s) = \frac{\eta^\pi(h)}{\eta^\pi(s)}$, where $\eta^\pi(s)=\sum_{h'\in s}\eta^\pi(h')$, and $\eta^\pi(h,z)$ is the product of probabilities of
each action taken by the players' policies along $h$ to $z$. The state-action values $q_{\pi,i}(s,a)$ are defined analogously.
Standard value-based RL algorithms estimate these
quantities for policy evaluation.
%) and for policy improvement.
Regret minimization in zero-sum games uses a different
notion of value, the \defword{counterfactual value}: $v^c_{\pi,i}(s) = \sum_{(h,z) \in \cZ(s)} \eta^\pi_{-i}(h) \eta^\pi(h,z) u_i(z)$,
where $\eta^\pi_{-i}(h)$ is the product of opponents' policy probabilities along $h$.
We also write $\eta^\pi_{i}(h)$ the product of player $i$'s own probabilities along $h$.
Under the standard assumption of perfect recall, we have that for any $h,h'\in s$, $\eta^\pi_{i}(h)=\eta^\pi_{i}(h')$. Thus 
counterfactual values are formally related to the standard values~\cite{Srinivasan18RPG}:
$v_{\pi,i}(s) = \frac{ v^c_{\pi,i}(s) }{\beta_{-i}(\pi, s)}$,
where $\beta_{-i}(\pi, s) = \sum_{h \in s}\eta^{\pi}_{-i}(h)$. 
%\rmnote{Under perfect recall assumption only}
Also, $q^c_{\pi,i}(s,a)$ is defined similarly except over histories $(ha, z) \in \cZ(s)$,
where $ha$ is history $h$ concatenated with action $a$.

Counterfactual regret minimization (CFR) is a tabular policy iteration algorithm that has been at the core of
the advances in Poker AI~\cite{CFR}. On each iteration $t$, CFR computes
counterfactual values $q_{\pi,i}^c(s, a)$ and $v_{\pi,i}^c(s)$ for each state $s$ and action $a \in \cA(s)$ 
and the regret of {\it not} choosing action $a$ (or equivalently the advantage of choosing action $a$)
at state $s$, $r^t(s,a) = q_{\pi^t,i}^c(s, a) - v_{\pi^t,i}^c(s)$.
CFR tracks the cumulative regrets for each state and action, $R^T(s,a) = \sum_{t=1}^T r^t(s,a)$.
Define $(x)^+ = \max(0, x)$; regret-matching then updates the policy of each action $a \in \cA(s)$ as follows~\cite{Hart00}:
\begin{equation}
\label{eq:regret-matching}
\pi^{T+1}(s,a) = \textsc{NormalizedReLU}(R^T, s, a) =
\begin{cases}
\frac{R^{T,+}(s,a)} {\sum_{b \in \cA(s)} R^{T,+}(s,b)} & \text{if $\sum_{b \in \cA(s)} R^{T,+}(s,b) > 0$}\\
\frac{1}{|\cA(s)|}  & \text{otherwise}\\
\end{cases},
\end{equation}
In two-player zero-sum games, the mixture policy $\bar{\pi}^T$ converges to the set of Nash equilibria as $T \rightarrow \infty$.

Traditional (off-policy) Monte Carlo CFR (MCCFR) is a generic family of sampling variants~\cite{Lanctot09mccfr}.
In \defword{outcome sampling} MCCFR, a behavior policy $\mu_i$ is used by player $i$, while players $-i$
use $\pi_{-i}$, a trajectory $\rho \sim (\mu_i, \pi_{-i})$ is sampled, and the \defword{sampled
counterfactual value} is computed:
\begin{equation}
\label{eq:off-policy-sampled-cfv}
\tilde{q}_{\pi,i}^c(s,a~|~\rho) = \frac{1}{\eta^{(\mu_i, \pi_{-i})}_i(z)} \eta^{(\mu_i, \pi_{-i})}_i(ha,z) u_i(z),
\end{equation}
if $(s,a) \in \rho$, or $0$ otherwise. 
$\tilde{q}_{\pi,i}^c(s,a~|~\rho)$ is an unbiased estimator of $q^c_{\pi,i}(s,a)$~\cite[Lemma 1]{Lanctot09mccfr}. 

However, since these quantities are divided by $\eta^{(\mu, \pi_{-i})}_i(z)$, the product of player $i$'s probabilities, 
(i) there can be significant variance introduced by sampling, especially in problems involving long sequences of decisions, and (ii) the ranges of the $\tilde{v}_i^c$ 
can vary wildly (and unboundedly if the exploration policy is insufficiently mixed)
over iterations and states, which could make approximating the values in a general way particularly challenging~\cite{Waugh15solving}. 
Deep CFR and Double Neural CFR are successful large-scale implementations of CFR with function approximation, and they get around this variance issue by using external sampling or a robust sampling technique, both of which require a perfect game model and enumeration of the tree. This is unfeasible in very large environments or in the RL setting where full trajectories are generated from beginning to the end without having access to a generative model which could be used to generate transitions from any state.

\section{Retrospective Policy Improvement}

Retrospective policy improvement focuses on learning from past behavior, by storing complete descriptions of value functions and/or policies, deriving a new policy from them via aggregation, rather than e.g. greedily optimizing a current policy. 
Specifically, a retrospective agent finds a new policy by playing back through {\it past policies}, rather than maintaining a buffer of {\it past data}.
%\dmnote{If the past data buffer is growing, then so could the buffer of past policies. And just like with our buffer of past policies, the data buffer would not need to grow if we use reservior sampling. Saying that the buffer is ``growing'' looks like a subtle dig at the data buffer approach to me. I would just remove ``growing'' and then the sentence looks good to me.}
The general idea is common in learning partially-observable multiagent games, where algorithms are built upon regret minimization and learning from expert advice~\cite{Bubeck12}.
% In multiagent decision-making, independent greedy policy improvement can lead to violating the Markov assumption~\cite{Laurent11}.
% I moved the discussion of the discussion of this sentence to discussion.tex because it was getting awkward :)
% I'm not sure this line is all that important, happy to just leave it out.
Regret minimization in self-play can approximate various forms of equilibria~\cite{Blum07}. In single-agent settings, they can also approximate optimal policies in Markov decision processes~\cite{Politex}, even when the reward function changes over time~\cite{Even-Dar05,kash2019combining}, and solve a general class of robust optimization problems~\cite{chen2012tractable}.
We also show some appealing properties in the single-agent case via some illustrative examples (Section~\ref{sec:single-agent}).

%{\color{red}
%\begin{itemize}
%    \item the algorithm remembers all past policies.
%    \item at the limit is CFR,
%    \item In the single agent case we have properties worth noticing:
%    \begin{itemize}
%        \item Good for exploration,
%        \item Regret matching seem to be good for exploration (because policies are stochastic)
%    \end{itemize}
%\end{itemize}
%}

\subsection{The Advantage Regret-Matching Actor-Critic}

\algname is a model-free RL algorithm motivated by CFR.
Like algorithms in the CFR framework, \algname uses a centralized training setup and operates in epochs that correspond to CFR iterations.
Like RCFR, \algname uses function approximation to generate policies.
\algname was designed so that as the number of samples per epoch increases and the expressiveness of the function approximator approaches a lookup table, the generated sequence of policies approaches that of CFR.
Instead of accumulating cumulative regrets-- which is problematic for a neural network-- the algorithm learns a conditional advantage estimate $\bar{W}(s, a)$ by regression toward a history-dependent advantage $A(h,a)$, for $h\in s$, and uses it to derive the next set of joint policies that CFR would produce. Indeed we show that $\bar{W}(s, a)$ is an estimate of the cumulative regret $R(s,a)$ up to a multiplicative factor which is a function of the information state $s$ only, and thus cancels out during the regret-matching step.
\algname is a Monte Carlo algorithm in the same sense as MCCFR: value estimates are trained from full episodes. It uses off-policy learning for training the value estimates (\ie critics), which we show is sufficient to derive $\bar{W}$. However, contrary to MCCFR, it does {\em not} use importance sampling.
\algname is summarized in Algorithm~\ref{alg}.

%\dmnote{It looks like $\bar{W}$ is first used in Algorithm~\ref{alg}. Why not just $W$ as used in the text?}
%\mlnote{Different people wrote the sections.. let's fix it.}

\begin{wrapfigure}[35]{R}{0.6\textwidth}
\begin{minipage}{0.6\textwidth}
\vspace{-0.5cm}
\begin{algorithm2e}[H]
\SetKwInOut{Input}{input}\SetKwInOut{Output}{output}
\Input{initial set of parameters $\btheta^0$, number of players $n$}
\caption{Advantage Regret-Matching Actor-Critic\label{alg}}
$i \leftarrow 1$ \;
\For{epoch $t \in \{0, 1, 2, \cdots\}$}{
  reset $\cD \leftarrow \emptyset$ \;
  Let $\pi^t(s) = \textsc{NormalizedReLU}(\bar{W}_{\btheta^t}(s))$ (\ie Eq.~\ref{eq:regret-matching})\;
  Let $v_{\btheta^t}(h) = \sum_{a \in \cA(h)} \pi^t(h,a) q_{\btheta^t}(h,a)$ \;
  Let $\mu_i^t$ be a behavior policy for each player $i$ \;
  \For{episode $k \in \{1, \ldots, K_{act}\}$}{
        $i \leftarrow (i + 1)~\text{\bf mod}~n$ \;
        Sample $j \sim \textsc{Unif}(\{ 0, 1, \cdots, t-1\})$ \;
        Sample trajectory $\rho \sim (\mu_i, \pi_{-i}^j)$ \;
        let $d \leftarrow (i, j, \{ u_i(\rho) \}_{i \in \cN})$ \;
        \For{history $h \in \rho$ where player $i$ acts}{
            let $s$ be the state containing $h$ \;
            let $\vec{r} = \{ q_{\btheta^j}(h,a') - v_{\btheta^j}(h) \}_{a' \in \cA(s)}$ \;
            let $a$ be the action that was taken in $\rho$ \;f
            append $(h, s, a, \vec{r}, \pi^j(s))$ to $d$ \;
        }
        add $d$ to $\cD$\;
  }
  \For{learning step $k \in \{ 1, \ldots, K_{learn} \}$}{
    Sample a random episode/batch $d \sim \textsc{Unif}(\cD)$: \;
    \For{history and corresponding state $(h,s) \in d$}{
        Use Tree-Backup($\lambda$) to train the critic $q_{\btheta^t}(h,a)$ \;
        If $\tau(s) = i$: train $\bar{W}_{\btheta^t}$ to predict $A(h,a)$ \;
        If $\tau(s) \in -i$: train $\bar{\pi}_{\btheta^t}$ to predict $\pi^t(s)$ \;
    }
  }
  Save $\btheta^t$ for future retrospective replays \;
  $\btheta^{t+1} \leftarrow \btheta^t$ \;
}
\end{algorithm2e}
\end{minipage}
\end{wrapfigure}

\algname runs over multiple epochs $t$ and produces a joint policies $\pi^{t+1}$ at the end of each epoch.
Each epoch starts with an empty data set $\cD$ and simulates a variety of joint policies executing multiple training iterations of relevant function approximators. \algname trains several estimators which can be either heads on the same neural network, or separate neural networks. The first one estimate the history-action values $q_{\pi^t,i}(h, a) = \sum_{z\in\cZ(h,a)}\eta^{\pi^t}(h,z)u_i(z)$.
This estimator\footnote{In practice, rather than using $h$ as input to our approximators, we use a concatenation of all players' observations, i.e. an encoding of the {\it augmented information states} or {\it action-observation histories}~\cite{Burch14CFRD,Kovarik19FOG}. In some games this is sufficient to recover a full history. In others there is hidden state from all players, we can consider any chance event to be delayed until the first observation of its effects by any of the players in the game. Thus, the critics represent an expectation over those hidden outcomes. Since this does not affect the theoretical results, we choose this notation for simplicity. Importantly, \algname remains model-free: we never enumerate chance moves explicitly nor evaluate their probabilities which may be complex for many practical applications.}
can be trained on all previous data by using any off-policy policy evaluation algorithm  from experiences stored in replay memory (we use Tree-Backup$(\lambda)$~\cite{Precup00TB}).
If trained until zero error, this quantity would produce the same history value estimates as recursive CFR computes in its tree pass. %Using this estimator is one of the key contributions behind \algname.
Secondly, the algorithm also trains a state-action network $\bar{W}_i^t(s,a)$ that estimates the expected advantage $A_{\mu^t,i}(h,a)=q_{\mu^t,i}(h,a)-v_{\mu^t,i}(h)$ conditioned on $h\in s$ when following some mixture policy $\mu^t$ (which will be precisely defined in Section~\ref{sec:theory}). It happens that $\bar{W}_i^t(s,a)$ is an estimate of the cumulative regret $R^t(s,a)$ multiplied by a (non-negative) function which depends on the information state $s$ only, thus does not impact the policy improvement step by regret-matching (see Lemma~\ref{lemma:w}).
Once $\bar{W}_i^t(s, a)$ is trained, the next joint policy $\pi^{t+1}(s, a)$ can be produced by normalizing the positive part as in Eq.~\ref{eq:regret-matching}.
After each training epoch the joint policy $\pi^t$ is saved into a past policy reservoir, as it will have to be loaded and played during future epochs.
Lastly, an average policy head $\bar{\pi}^t$ is also trained via a classification loss to predict the policy $\pi^{t'}$ over all time steps $t' \le t$.
We explain its use in Section~\ref{sec:experiments}.

Using a history-based critic allows \algname to avoid using importance weight (IW) based off-policy correction as is the case in MCCFR, but at the cost of higher bias due to inaccuracies that the critic has. Using IW may be especially problematic for long games. For large games the critic will inevitably rely on generalization to produce history-value estimates.

To save memory, reservoir sampling with buffer of size of 1024 was used to prune past policies.

\subsection{Theoretical Properties}
\label{sec:theory}

Each epoch $t$ estimates
$q_{\pi^t,i}(h,a) = \sum_{z\in \cZ(h,a)}\eta^{\pi^t}(h,z)u_i(z)$
and value $v_{\pi^t, i}(h)=\sum_{a}\pi^t(h,a)q_{\pi^t,i}(h,a)$ for the current policies $(\pi^t)$. Let us write the advantages $A_{\pi^t,i}(h,a) = q_{\pi^t,i}(h,a) - v_{\pi^t, i}(h)$. Notice that we learn functions of the history $h$ and not state $s$. 

At epoch $T$, in order to deduce the next policy, $\pi^{T+1}$, CFR applies regret-matching using the cumulative counterfactual
regret $R_i^T(s,a)$. As already discussed, directly estimating $R_i^T$ using sampling suffers from high variance due to the inverse probability $\eta^{(\mu, \pi_{-i})}_i(z)$ in  \eqref{eq:off-policy-sampled-cfv}. Instead, \algname trains a network
$\bar{W}_i^T(s,a)$ that estimates a conditional advantage along
%$W_i^T(s,a) = \sum_{h\in s} \frac{\sum_{j=1}^T \eta^{\pi_{-i}^j}(h)}{\sum_{j=1}^T \eta^{\pi_{-i}^j}(s)} A_{\pi^t, i}(h,a).$
trajectories generated in the following way: For player $i$ we select a behavior policy $\mu_i^T$ providing a good state-space coverage, \eg a mixture of past policies $(\pi^t_i)_{t \leq T}$, with some added exploration (Section \ref{sec:ARMAC-bandit} provides more details). For the other players $-i$, for every trajectory, we choose one of the previous opponent policies $\pi_{-i}^j$ played at some epoch $j$ chosen uniformly at random from $\{ 1, 2, \cdots T \}$. Thus at epoch $T$, several trajectories $\rho^j$ are generated by following policy $(\mu^T_i,\pi^j_{-i})$, where 
$j\sim {\cal U}(\{ 1, 2, \cdots T \})$.

Then at each step $(h,a)$ along these trajectory $\rho^j$, the neural network estimate $\bar{W}_i^T(s,a)$ (where $s \ni h$) is trained to predict the advantage $A_{\pi^j,i}(h,a)$ using the empirical $\ell_2$ loss:  $\hat{\cal L}= \big[\bar{W}_i^T(s,a) - A_{\pi^j,i}(h,a)\big]^2$. 
Thus the corresponding average loss is 
$${\cal L}=\frac 1T\sum_{j=1}^T\E_{\rho^j\sim (\mu_i^T,\pi_{-i}^j)}\big[ \hat {\cal L} \big]= \frac 1T\sum_{j=1}^T \sum_{s\in \cS_i}\sum_{h\in s} \eta^{(\mu_i^T,\pi_{-i}^j)}(h) \mu_i^T(s,a) \big[\bar{W}_i^T(s,a) - A_{\pi^j,i}(h,a)\big]^2.$$
Thus if the network has sufficient capacity, it will minimize this average loss, thus our estimate $\bar{W}_i^T(s,a)$ will converge (when the number of trajectories goes to infinity) in each state-action pair $(s,a)$, such that the reach probability $\frac 1T\sum_t \eta^{(\mu_i^T,\pi_{-i}^t)}(s) \mu_i^T(s,a)>0$, to the conditional expectation
\begin{align}\label{eq:W.def}
W^T_i(s,a)= \sum_{h\in s} \frac{\frac 1T\sum_{j=1}^T \eta^{(\mu_i^T,\pi_{-i}^j)}(h)A_{\pi^k, i}(h,a)}{\frac 1T\sum_{j=1}^T \eta^{(\mu_i^T,\pi_{-i}^t)}(s)} \underbrace{=}_{\textrm{perfect recall}}\sum_{h\in s} \frac{\frac 1T\sum_{j=1}^T \eta^{\pi^j}_{-i}(h)A_{\pi^k, i}(h,a)}{\frac 1T\sum_{j=1}^T \eta^{\pi^t}_{-i}(s)}
\end{align}
Notice that $W_i^T$ does not depend on the exploratory policy $\mu_i^T$ for player $i$ chosen in round $T$. After several trajectories $\rho^j$ our network $\bar{W}^T_i$ provides us with a good approximation of the $W^T_i$ values and we use it in a regret matching update to define the next policy, $\pi_i^{T+1}(s) = \textsc{NormalizedReLU}(\bar{W}_i^T)$, \ie Equation~\ref{eq:regret-matching}.
The following lemma (proved in Appendix \ref{sec:proof}) shows that if $\bar{W}^T_i(s,a)$ is sufficiently close to the $W^T_i(s,a)$ values, then this is equivalent to CFR, i.e., doing regret-matching using the cumulative counterfactual regret $R^T$.

\begin{lemma}
\label{lemma:w}
The policy defined by $\textsc{NormalizedReLU}(W_i^T)$ is the same as the one produced by CFR when regret matching is employed as the information-state learner:
\begin{equation}\label{eq:same-policy}
\pi^{T+1}_i(s,a) = \frac{R_i^{T,+}(s,a)}{\sum_b R_i^{T,+}(s,b)} = \frac{W_i^{T,+}(s,a)}{\sum_b W_i^{T,+}(s,b)}.
\end{equation}
\end{lemma}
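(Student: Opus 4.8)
The plan is to prove the identity by showing that $W_i^T(s,a)$ is nothing but the cumulative counterfactual regret $R_i^T(s,a)$ rescaled by a strictly positive factor $B(s)$ that depends on the information state $s$ only and \emph{not} on the action $a$. Once that is in hand the lemma is immediate: $\textsc{NormalizedReLU}$ normalizes the positive parts $(\,\cdot\,)^+$ over the actions at a fixed $s$, so an $a$-independent positive multiplier cancels between numerator and denominator in the first branch of \eqref{eq:regret-matching}, while in the degenerate case $\sum_b R_i^{T,+}(s,b)=0$ one also has $\sum_b W_i^{T,+}(s,b)=0$ (again because $B(s)>0$), so both definitions return the uniform policy $1/|\cA(s)|$.

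First I would rewrite the closed form \eqref{eq:W.def}. By perfect recall the player-$i$ reach $\eta_i^{\mu_i^T}(h)$ is constant over $h\in s$, so it factors out of both the numerator and the denominator of \eqref{eq:W.def} and cancels — this is exactly the second equality displayed there — leaving
\[
W_i^T(s,a) \;=\; \frac{\sum_{j=1}^T \sum_{h\in s}\eta_{-i}^{\pi^j}(h)\,A_{\pi^j,i}(h,a)}{\sum_{j=1}^T \beta_{-i}(\pi^j,s)},
\]
with $\beta_{-i}(\pi^j,s)=\sum_{h\in s}\eta_{-i}^{\pi^j}(h)$, a quantity that manifestly depends on $s$ and $j$ but not on $a$. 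Note that the exploratory behavior policy $\mu_i^T$ has disappeared at this stage.

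Next I would connect the numerator to $R_i^T(s,a)$. Unwinding the definitions of the history values $q_{\pi,i}(h,a)$, $v_{\pi,i}(h)$ and of the counterfactual values $q^c_{\pi,i}(s,a)$, $v^c_{\pi,i}(s)$ from the Background section gives $q^c_{\pi^j,i}(s,a)=\sum_{h\in s}\eta_{-i}^{\pi^j}(h)\,q_{\pi^j,i}(h,a)$ and $v^c_{\pi^j,i}(s)=\sum_{h\in s}\eta_{-i}^{\pi^j}(h)\,v_{\pi^j,i}(h)$; subtracting, $\sum_{h\in s}\eta_{-i}^{\pi^j}(h)\,A_{\pi^j,i}(h,a)=q^c_{\pi^j,i}(s,a)-v^c_{\pi^j,i}(s)=r^j(s,a)$, the instantaneous counterfactual regret at epoch $j$. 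Summing over $j=1,\dots,T$ turns the numerator into $\sum_{j=1}^T r^j(s,a)=R_i^T(s,a)$, so $W_i^T(s,a)=B(s)\,R_i^T(s,a)$ with $B(s):=\big(\sum_{j=1}^T\beta_{-i}(\pi^j,s)\big)^{-1}$. This is well-defined and strictly positive precisely under the reach-probability condition stated before \eqref{eq:W.def}, which is also the condition under which $W_i^T$ itself was defined. Combining with the opening paragraph yields \eqref{eq:same-policy}.

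The main obstacle is the bookkeeping in the middle step: one has to match carefully the history-value quantities the critic estimates ($q_{\pi^j,i}(h,a)$, $v_{\pi^j,i}(h)$, $A_{\pi^j,i}(h,a)$) to the counterfactual quantities CFR manipulates, being scrupulous about which reach probabilities enter (player $i$'s own reach versus the opponents' reach $\eta_{-i}$), about the fact that the ``counterfactual'' value of an action omits the probability of playing that action, and about invoking perfect recall in exactly the places needed so that every $a$-independent factor — and in particular $\mu_i^T$ — drops out cleanly. The rest is elementary: a nonnegative scalar commutes with $(\cdot)^+$ and cancels under normalization.
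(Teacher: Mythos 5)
Your proposal is correct and follows essentially the same route as the paper's own proof: use perfect recall to cancel the $\mu_i^T$-dependent player-$i$ reach in \eqref{eq:W.def}, identify $\sum_{h\in s}\eta_{-i}^{\pi^j}(h)A_{\pi^j,i}(h,a)$ with the instantaneous counterfactual regret so that $W_i^T(s,a)$ equals $R_i^T(s,a)$ divided by the state-only factor $\sum_j \eta_{-i}^{\pi^j}(s)$, and observe that a positive $a$-independent factor commutes with $(\cdot)^+$ and cancels under normalization. Your explicit treatment of the degenerate branch $\sum_b R_i^{T,+}(s,b)=0$ is a small addition the paper leaves implicit, but it does not change the argument.
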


The $\bar{W}^T(s,a)$ estimate the expected advantages $\frac 1T\sum_{j=1}^T A_{\pi^j}(h,a)$ conditioned on $h\in s$. Thus \algname does not suffer from the variance of estimating the cumulative regret $R^T(s,a)$, and in the case of infinite capacity, we can prove that, from any $(s,a)$, the estimate $\bar{W}^T(s,a)$ is unbiased as soon as the  $(s,a)$ has been sampled at least once (see Lemma~\ref{lemma:unbiased} in the Appendix).

\subsection{Adaptive Policy Selection}
\label{sec:ARMAC-bandit}

\begin{figure}[thbp]
\centering
\begin{subfigure}{0.32\textwidth}
  \centering
  \includegraphics[width=1.0\linewidth]{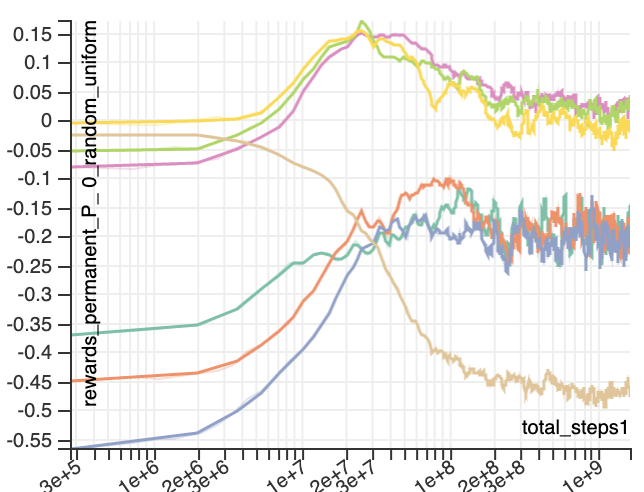}
  \caption{Goofspiel}
\end{subfigure}%  
\begin{subfigure}{0.32\textwidth}
  \centering
  \includegraphics[width=1.0\linewidth]{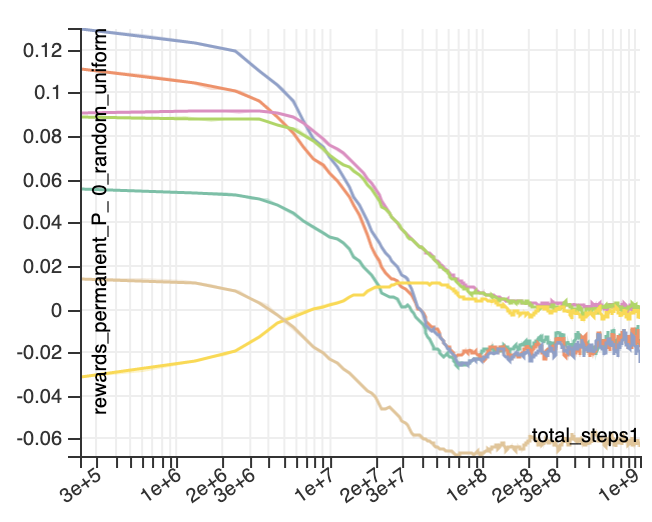}
  \caption{Leduc Poker}
\end{subfigure}%
\begin{subfigure}{0.32\textwidth}
  \centering
  \includegraphics[width=1.0\linewidth]{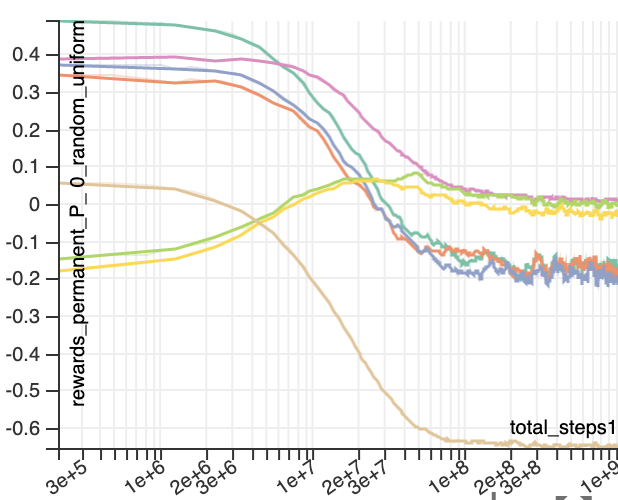}
  \caption{Liars Dice}

\end{subfigure}
\caption{An average reward a given policy modulation scores against opponent $\bar{\pi}^t$ as a function of time (measured in acting steps). The brown curve is a random uniform policy (i). 
Cyan, orange and blue is (ii) with $\epsilon \in {0.0, 0.01, 0.05}$ respectively.
Pink, green and yellow is (iii) with $\epsilon \in {0.0, 0.01, 0.05}$ respectively. }
\label{fig:ARMAC-bandit}
\end{figure}

\algname dynamically switches between what policy to use based on estimated returns. 
For every $t$ there is a pool of candidate policies, all based on the following four policies:
(i) random uniform policy.
(ii) several policies defined by applying Eq~\ref{eq:regret-matching} over the current epoch's regret only ($q_{\btheta^t}(h,a) - v_{\btheta^t}(h)$), with different levels of random uniform exploration: $\epsilon \in {0.0, 0.01, 0.05}$ . 
(iii) several policies defined by the mean regret, $\pi^t$ as stated in Algorithm~\ref{alg}, also with the same level of exploration.
(iv) the average policy $\bar{\pi}^t$ trained via classification.
The purpose of generating experiences using those policies is to facilitate the problem of exploration and to help produce meaningful data at initial stages of learning before average regrets are learnt.
Each epoch, the candidate policies are ranked by cumulative return against an opponent playing $\bar{\pi}_{\btheta^t}$. The one producing highest rewards is used half of the times. When sub-optimal policies are run for players $-i$, they are not used to train mean regrets for player $i$, but can be used to train the critic. Typically (but not always), (ii) produces the best policy initially and allows to bootstrap the learning process with the best data (Fig. \ref{fig:ARMAC-bandit}). In later stages of learning, (iii) with the lowest level of $\epsilon$ starts providing better policies and gets consistently picked over other policies. The more complex the game is, the longer it takes for (iii) to take over (ii).

Exploratory policy $\mu_i^T$ is constructed by taking the most recent neural network with $50\%$ probability or otherwise sampling one of the past neural networks from replay memory uniformly and modulating it by the above described method. Such sampling method provides both on-policy trajectories for learning $q_{\btheta^j}^T(h,a)$ while also making sure that previously visited states get revisited.

\subsection{Single-Agent Environments}
\label{sec:single-agent}

Despite \algname being based on commonly-used multiagent algorithms, it has properties that may be desirable in the single-agent setting.
First, similar to policy gradient algorithms in the common ``short corridor example''~\cite[Example 13.1]{SuttonBarto18}, stochastic policies are representable by definition, since they are normalized positive mean regrets over the actions. This could have a practical effect that entropy bonuses typically have in policy gradient methods, but rather than simply adding arbitrary entropy, the relative regret over the set of past policies is taken into account.

\begin{figure}
\centering
\begin{tabular}{cc}
  \includegraphics[width=0.35\linewidth]{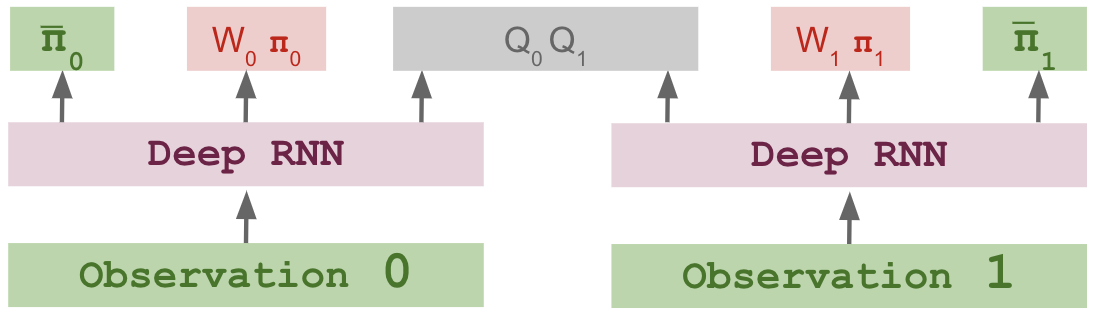} & 
  \includegraphics[width=0.4\linewidth]{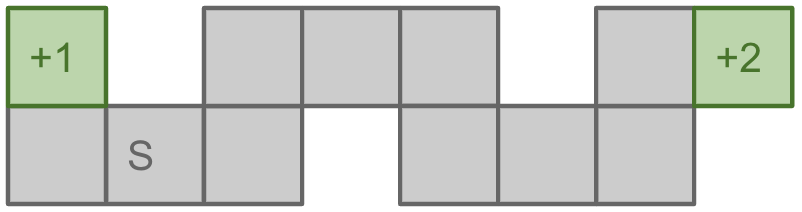} \\
  (a) & (b) \\
\end{tabular}
\caption{The (a) Multi-headed network architecture, and (b) Exploration example.}
\label{fig:netarch-gridworld}
\end{figure}

Second, a retrospective agent uses a form of {\it directed exploration} of different exploration policies~\cite{Badia19}. Here, this is achieved by the simulation $(\mu_i^T, \pi_{-i}^t)$, which could be desirable whenever there is overlapping structure in successive tasks. $\mu_i^T$ here is an exploratory policy, which consists of a mixture of all past policies (plus random uniform) played further modulated with different amounts of random uniform exploration (more details are given in Section \ref{sec:ARMAC-bandit}). Consider a gridworld illustrated in Fig. \ref{fig:netarch-gridworld}(b). Green squares illustrate positions where the agent $i$ gets a reward and the game terminates. Most of RL algorithms would find the reward of $+1$ first as it is the closest to the origin $S$. Once this reward is found, a policy would quickly learn to approach it, and finding reward $+2$ would be problematic. \algname, in the meantime, would keep re-running old policies, some of which would pre-date finding reward $+1$, and thus would have a reasonable chance of finding $+2$ by random exploration. This behaviour may also be useful if instead of terminating the game, reaching one of those two rewards would start next levels, both of which would have to be explored.

These properties are not necessarily specific to \algname. For example, \Politex (another retrospective policy improvement algorithm~\cite{Politex}) has similar properties by keeping its past approximators intact.
Like \Politex, we show an initial investigation of \algname in Atari in Appendix~\ref{sec:single-agent-atari}. Average strategy sampling MCCFR~\cite{gibson2012efficient} also uses exploration policies that are a mixture of previous policies and uniform random to improve performance over external and outcome sampling variants. However, this exact sampling method cannot be used directly in \algname as it requires a model of the game.

\subsection{Network architecture}

\algname can be used with both feed-forward (FF) and recurrent neural networks (RNN) (Fig. \ref{fig:netarch-gridworld}(a)). For small games where information states can be easily represented, FF networks were used. For larger games, where consuming observations rather than information states is more natural, RNNs were used. The critic was evaluated by feeding both player observations, but to the respective sides of the network. Policies only get their own observations to the respective side of the network. Both sides of the network in Fig. \ref{fig:netarch-gridworld}(a) share weights. More details can be found in Appendix in Section \ref{sec:net_arch_2}.

\section{Empirical Evaluation}
\label{sec:experiments}

For partially-observable multiagent environments, we investigate Imperfect Information (II-) Goofspiel, Liar's Dice, and Leduc Poker and betting-abstracted no-limit Texas Hold'em poker (in Section~\ref{sec:poker}).
Goofspiel is a bidding card game where players spend bid cards collect points from a deck of point cards.
Liar's dice is a 1-die versus 1-die variant of the popular game where players alternate bidding on the dice values.
Leduc poker is a two-round poker game with a 6-card deck, fixed bet amounts, and a limit on betting.
Longer descriptions of each games can be found in~\cite{Lockhart19ED}.
We use OpenSpiel~\cite{LanctotEtAl2019OpenSpiel} implementations with default parameters
for Liar's Dice and Leduc poker, 
and a 5-card deck and descending points order for II-Goofspiel.
%\footnote{OpenSpiel game string:
%turn\_based\_simultaneous\_game(game=goofspiel(imp\_info=True, num\_cards=5, points\_order=descending))}.
To show empirical convergence, we use NashConv, the sum over each player's incentive to deviate to their best response
unilaterally~\cite{Lanctot17PSRO}, which can be interpreted as an empirical distance from Nash
equilibrium (reaching Nash at 0).

%\subsection{Benchmark Multiagent Partially-Observable Games}
%\label{sec:exp-impinfo-games}

\begin{figure}[thbp]
\centering
\begin{subfigure}{0.32\textwidth}
  \centering
  \includegraphics[width=1.0\linewidth]{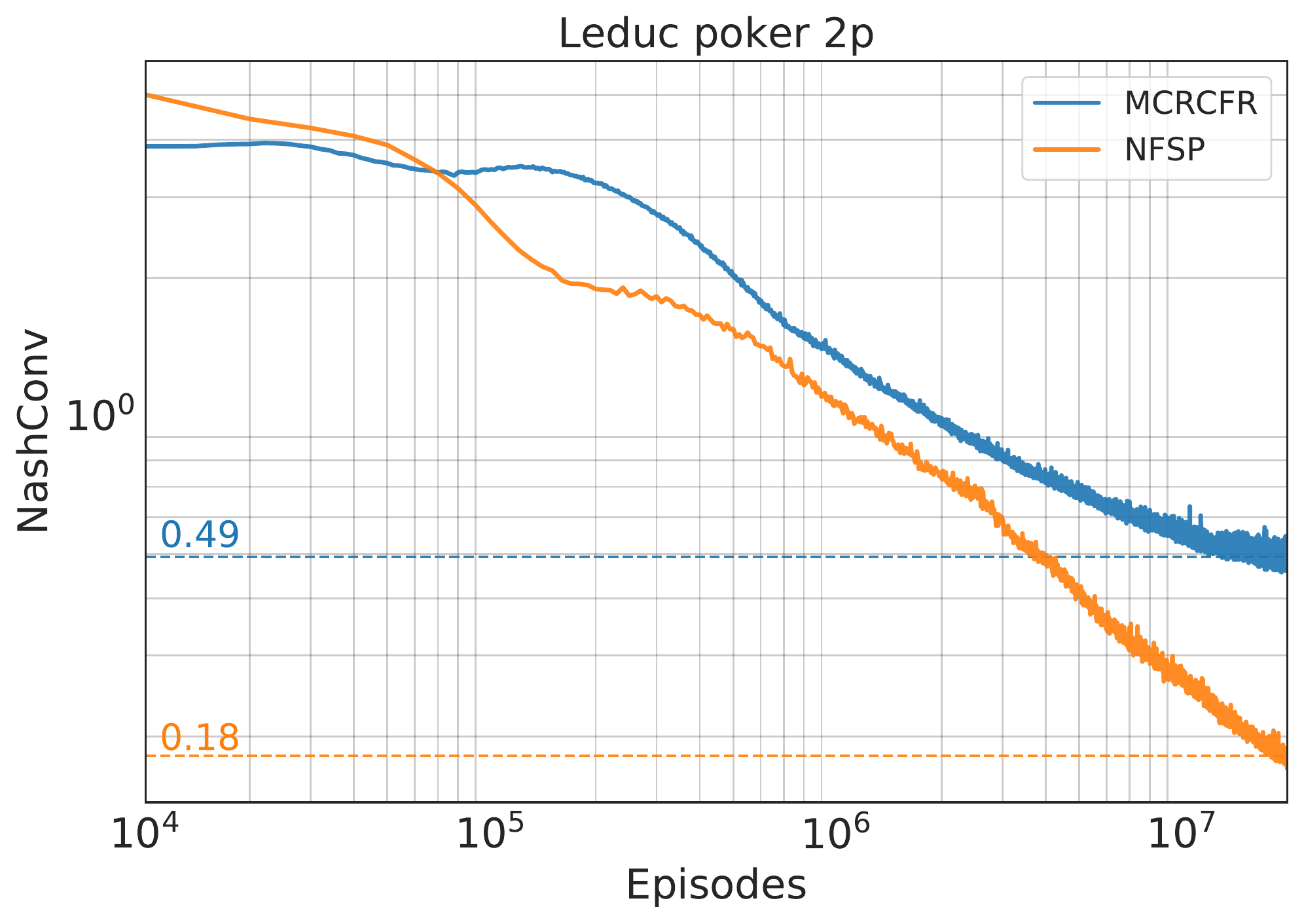}
  \caption{Leduc baselines.}
\end{subfigure}%
\begin{subfigure}{0.31\textwidth}
  \centering
  \includegraphics[width=1.0\linewidth]{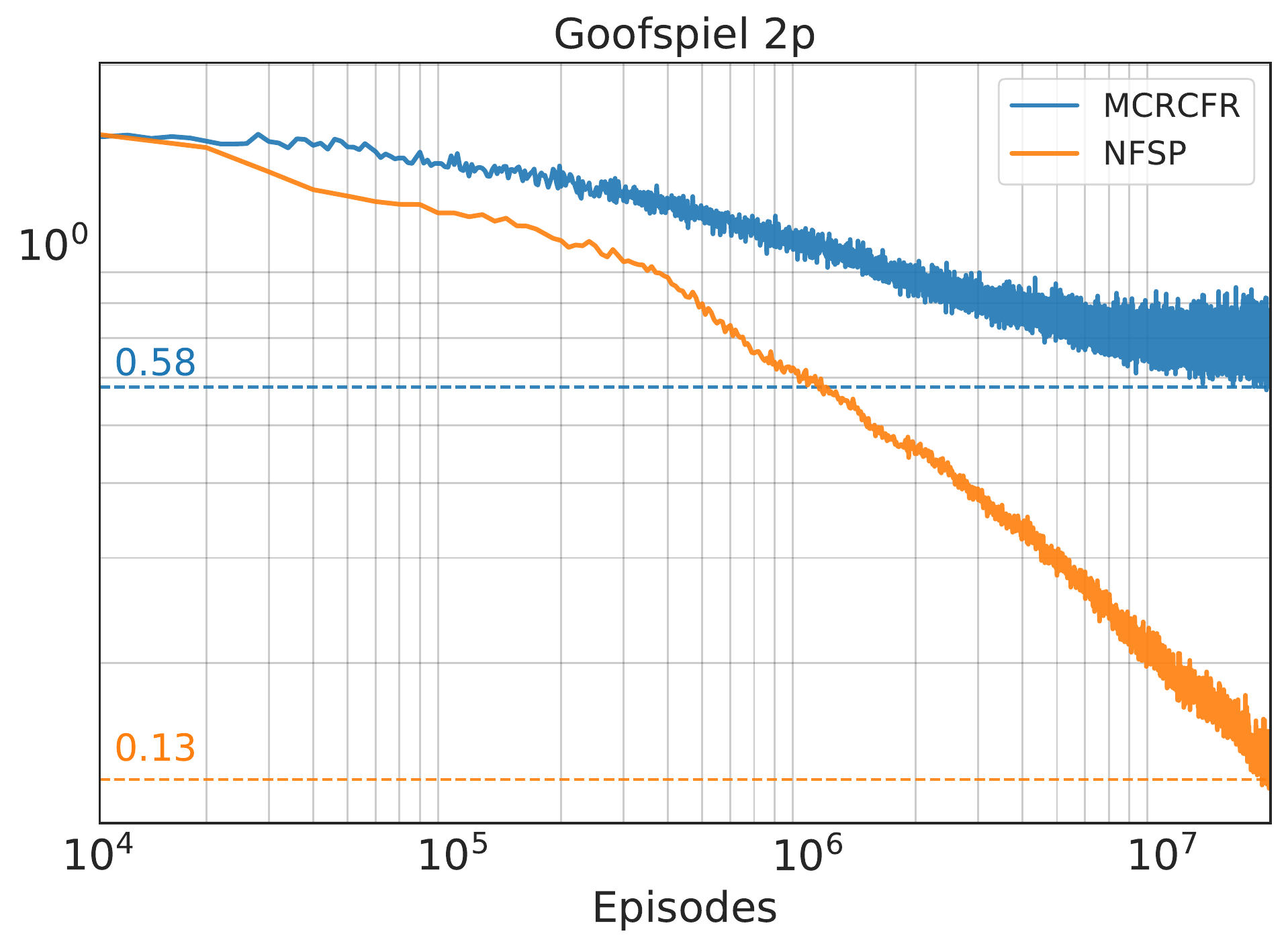}
  \caption{Goofspiel baselines.}
\end{subfigure}%
\begin{subfigure}{0.32\textwidth}
  \centering
  \includegraphics[width=1.0\linewidth]{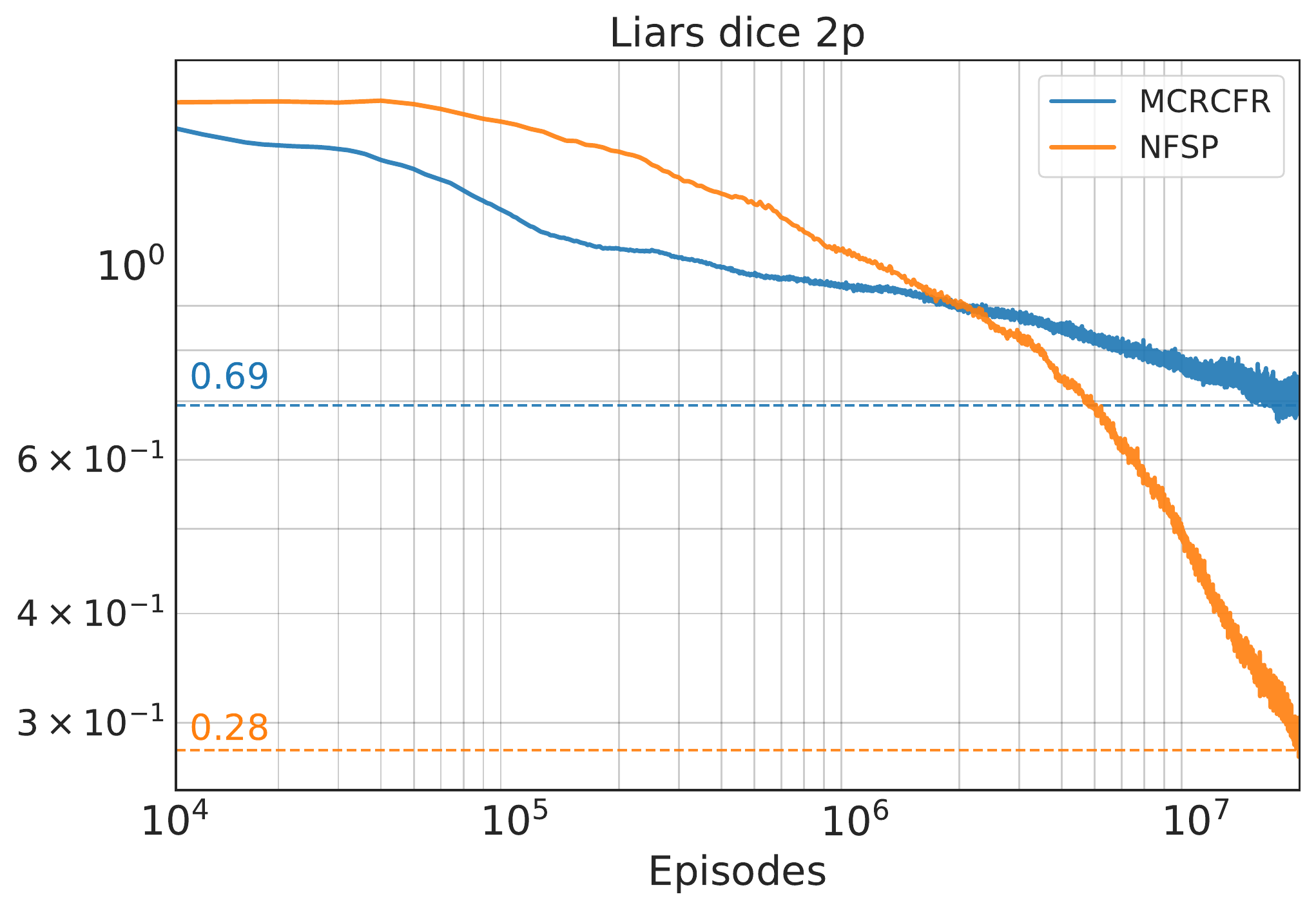}
  \caption{Liars Dice baselines.}
\end{subfigure}
\caption{NFSP and MC-RCFR on the Leduc Poker, II-GoofSpiel with 5 cards and Liars Dice %{\color{red}Clarify the number of episodes for MC-RCFR}.
}
\label{fig:BI-multiCN}
\end{figure}

\begin{figure}[thbp]
\centering
\begin{subfigure}{0.32\textwidth}
  \centering
  \includegraphics[width=1.0\linewidth]{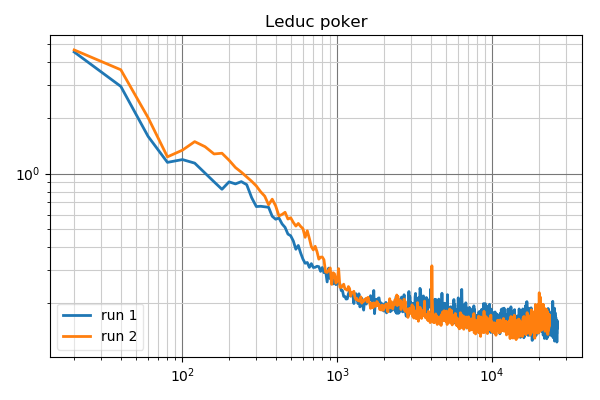}
  \caption{NashConv on Leduc Poker}
\end{subfigure}%
\begin{subfigure}{0.32\textwidth}
  \centering
  \includegraphics[width=1.0\linewidth]{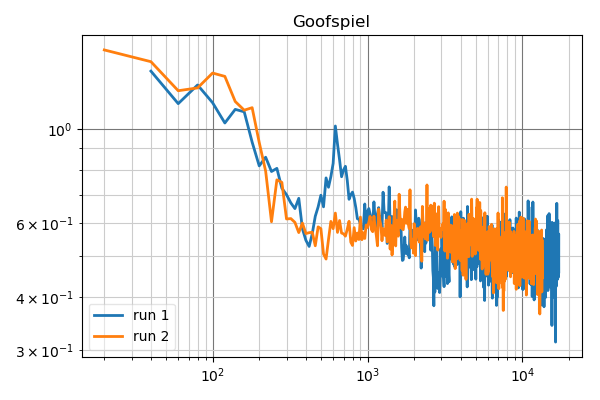}
  \caption{NashConv on Goofspiel.}
\end{subfigure}%
\begin{subfigure}{0.32\textwidth}
  \centering
  \includegraphics[width=1.0\linewidth]{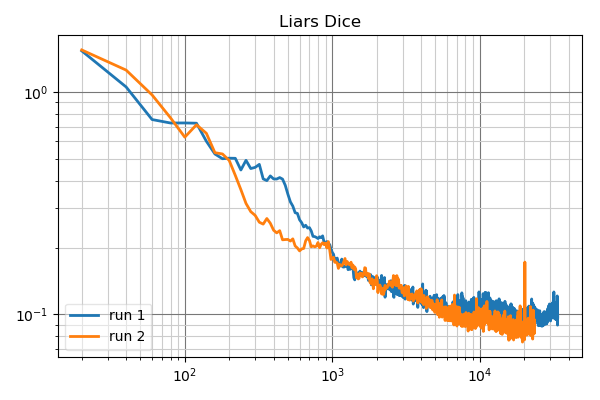}
  \caption{NashConv on Liars Dice}
\end{subfigure}
\caption{\algname results on Leduc, II-Goofspiel, and Liar's Dice. The y-axis is NashConv of the average strategy $\bar{\pi}^t$. The x-axis is number of epochs. One epoch consists of 100 learning steps. Each learning step processes 64 trajectories of length 32 sampled from replay memory. The final value reached by the best runs are 0.18 (Leduc), 0.5 (II-Goofspiel), and 0.095 (Liar's Dice).}
\label{fig:ARMAC-exploit}
\end{figure}

We compare empirical convergence to approximate Nash equilibria using a model-free sampled form of
regression CFR~\cite{Waugh15solving} (MC-RCFR). Trajectories are obtained
using outcome sampling MCCFR~\cite{Lanctot09mccfr},
which uses off-policy importance sampling to obtain unbiased estimates of immediate regrets $\hat{r}$,
and average strategy updates $\hat{s}$, and individual (learned) state-action baselines~\cite{Schmid19VRMCCFR}
to reduce variance.
A regressor then predicts $\hat{\bar{R}}$ and a policy is obtained via Eq.~\ref{eq:regret-matching},
and similarly for the average strategy.
Each episode, the learning player $i$ plays with an $\epsilon$-on-policy behavior policy (while opponent(s) play
on-policy) and adds every datum
$(s, \hat{r}, \pi(\hat{\bar{R}}))$ to a data set, $\cD$, with a retention rule based on reservoir sampling so it
approximates a uniform sample of all the data ever seen.
MC-RCFR is related, but not equivalent to, a variant of DeepCFR~\cite{Brown18DeepCFR} based on outcome sampling (OS-DeepCFR)~\cite{steinberger2020dream}. Our results differ significantly from the OS-DeepCFR results reported in~\cite{steinberger2020dream}, and we discuss differences in assumptions and experimental setup from previous work in Appendix~\ref{app:baseline-details}.
As with \algname, the input is raw bits with no expert features. 
We use networks with roughly the same number of parameters as the \algname experiments: feed-forward with 4 hidden
layers of 128 units with concatenated ReLU~\cite{Shang16crelu} activations, and train using the Adam optimizer.
We provide details of the sweep over hyper-parameters in
Appendix~\ref{app:baseline-details}.

Next we compare \algname to NFSP~\cite{Heinrich16NFSP}, which combines fictitious play with deep neural network function
approximators. Two data sets, $\cD^{RL}$ and $\cD^{SL}$, store transitions of sampled experience for reinforcement learning and
supervised learning, respectively. $\cD^{RL}$ is a sliding window used to train a best response policy to $\bar{\pi}_{-i}$ via DQN.
$\cD^{SL}$ uses reservoir sampling to train $\bar{\pi}_{i}$, an average over all past best response policies. During play, each
agent mixes between its best response policy and average policy.
This stabilizes learning and enables the average policies to
converge to an approximate Nash equilibrium. Like \algname and MC-RCFR, NFSP does not use any expert features. 

Convergence
plots for MC-RCFR and NFSP are shown in Figure~\ref{fig:BI-multiCN}, and for \algname in Figure \ref{fig:ARMAC-exploit}.
NashConv values of \algname are lower (Liar's Dice) and higher (Goofspiel) than NFSP, but significantly lower than MC-RCFR in all cases.
MC-RCFR results are consistent with the outcome sampling results in DNCFR~\cite{Li18DNCFR}. 
Both DNCFR and Deep CFR compensate for this problem by instead using external and robust sampling, which require a forward model.
As with NFSP, and RL more generally, introducing bias to reduce variance may be worthwhile when scaling to large environments.
So, next we investigate the performance of \algname in a much larger game.

\subsection{No-Limit Texas Hold'em}
\label{sec:poker}

\begin{figure}[thbp]
\centering
\begin{subfigure}{1.0\textwidth}
  \centering
  \includegraphics[width=1.0\linewidth]{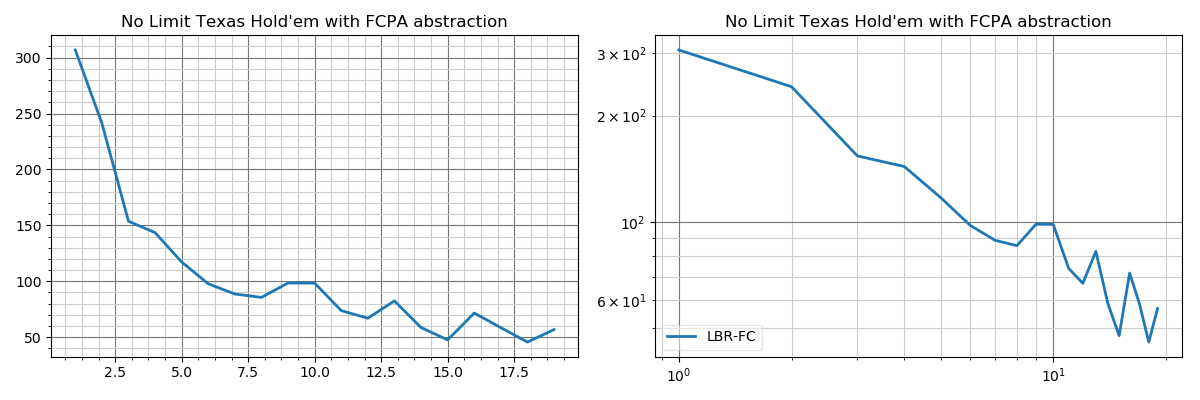}
\end{subfigure}%
\caption{\algname results in No-Limit Texas Hold'em trained with FCPA action abstraction evaluated using LBR-FC metric. The y-axis represents the amount LBR-FC wins agains the ARMAC-trained policy.
The x-axis indicate days of training. The left graph shows the learning curve in a linear scale, while the right one shows the same curve in a log-log scale.} 
\label{fig:ARMAC-LBR}
\end{figure}

We ran \algname on the game of no-limit Texas Hold'em poker, using the common \{ Fold, Call, Pot, All-in \} (FCPA) action/betting
abstraction. This game is orders of magnitude larger than benchmark games used above ($\approx 4.42 \cdot 10^{13}$ information states).
Action abstraction techniques were used by all of the state-of-the-art Poker AI bots up to $2017$.
Modern search-based techniques of DeepStack~\cite{Moravcik17DeepStack} and Libratus~\cite{Brown17Libratus} still include action abstraction, albeit in the search tree.

Computing the NashConv requires traversing the whole game and querying the network at each information state, 
which becomes computationally infeasible as the game grows. So, to assess the quality of the resulting policy, we use local best-response (LBR)~\cite{LisyLBR}. LBR is an exploiter agent that produces a lower-bound on the exploitability: given some policy $\pi_{-i}$ it does a shallow search using the policy at opponent nodes, and a poker-specific heuristic evaluation at the frontier of the search. LBR found that previous competition-winning abstraction-based Poker bots  were far more exploitable than first expected. 
In our experiments, LBR was limited to the betting abstractions: FCPA, and FC. We used three versions of LBR: LBR-FCPA, which uses all 4 actions within the abstraction, LBR-FC, which uses a more limited action set of \{ Fold, Call \} and LBR-FC12-FCPA34 which has a limited action set of \{ Fold, Call \} for the first two rounds and FCPA for the rest.

We first computed the average return that an \algname-trained policy achieves against uniform random.
Over 200000 episodes, the mean value was 516 (chips) $\pm$ 25 (95\% c.i.).
Similarly, we evaluated the policy against LBR-FCPA;
it won 519 $\pm$ 81 (95\% c.i.) per episode. 
Hence, LBR-FCPA was unable to exploit the policy. ARMAC also beat LBR-FC12-FCPA34 by 867 $\pm$ 87 (95\% c.i.) . Interestingly, ARMAC learned to beat those two versions of LBR surprisingly quickly. A randomly initialized ARMAC network lost against LBR-FCPA by -704 $\pm$ 191 (95\% c.i.) and against LBR-FC12-FCPA34 by -230 $\pm$ 222 (95\% c.i.), but was beating both after a mere 1 hour of training by 561 $\pm$ 163 (95\% c.i.) and 427 $\pm$ 140 (95\% c.i.) respectively (~3 million acting steps, ~11 thousand learning steps). \footnote{We verified both implementations LBR against random uniform opponent to make sure that the number matched the ones in \cite{LisyLBR}.}

However, counter-intuitively, ARMAC was exploited by LBR-FC which uses a more limited action set. ARMAC scored -46 $\pm$ 26 (95\% c.i.) per episode  after 18 days of training on a single GPU, 1.3 billion acting steps (rounds), 5 million learning steps, 50000 CFR epochs. The trend over training time is shown in Figure~\ref{fig:ARMAC-LBR}.
To the best of our knowledge, this is the first time a bound on exploitability has been reported for any form of no-limit Texas Hold'em among this class of algorithms.

\section{Conclusion and Future Work}

\algname was demonstrated to work on both single agent and multi-agent benchmarks. It is brings back ideas from computational game theory to address exploration issues while at the same time being able to handle learning in non-stationary environments.
As future work, we intend to apply it to more general classes of multiagent games; \algname has the appealing property that it already stores the joint policies and history-based critics, which may be sufficient for convergence one of the classes of extensive-form correlated equilibria~\cite{celli2019learning,farina2019coarse,celli2020noregret}.

\bibliographystyle{plain}
\bibliography{paper}

\newpage
\appendix
{\Large {\bf Appendices}}

\section{Baseline Details and Hyperparameters}
\label{app:baseline-details}

For MC-RCFR, we sweep over all combinations of the
exploration parameter,
using a (learned) state-action baseline~\cite{Schmid19VRMCCFR},
and learning rate
$(\epsilon, b, \alpha) \in \{ 0.25, 0.5, 0.6, 0.7, 0.9, 0.95, 1.0 \} \times \{ \textsc{True}, \textsc{False} \} \times \{ 0.0001, 0.00005, 0.00001 \}$, where each combination is averaged over five seeds.
We found that higher exploration values worked consistently better, which matches the motivation of the robust
sampling technique (corresponding to $\epsilon = 1$) presented in~\cite{Li18DNCFR} as it leads to reduced variance
since part of the correction term is constant for all histories in an information state.
The baseline helped significantly in the larger game with more variable-length episodes.

For NFSP, we keep a set of hyperparameters fixed, in line with \cite{Lanctot17PSRO} and  \cite{Heinrich16NFSP}: anticipatory parameter $\eta = 0.1$, $\epsilon$-greedy decay duration $20M$ steps, reservoir buffer capacity $2M$ entries, replay buffer capacity $200k$ entries, while sweeping over a combination of the following hyperparameters: $\epsilon$-greedy starting value $\{0.06, 0.24\}$, RL learning rate ${0.1, 0.01, 0.001}$, SL learning rate $\{0.01, 0.001, 0.005\}$,  DQN target network update period of $\{1000, 19200\}$ steps (the later is equivalent to 300 network-parameter updates). Each combination was averaged over three seeds. Agents were trained with the ADAM optimizer, using MSE loss for DQN and one gradient update step using mini-batch size $128$, every $64$ steps in the game.

Finally, note that there are at least four difference in the results, experimental setup, and assumptions between MC-RCFR and OS-DeepCFR reported in~\cite{steinberger2020dream}:
\begin{enumerate}
\item \cite{steinberger2020dream} uses domain expert input features which do not generalize outside of poker. The neural network architecture we use is a basic MLP with raw input representations, whereas \cite{steinberger2020dream} uses a far larger network. Our empirical results on benchmark games compare the convergence properties of knowledge-free algorithms across domains.
\item The amount of training per iteration is an order of magnitude larger in OS-DeepCFR than our training. In \cite{steinberger2020dream}, every 346 iterations, the Q-network is trained using 1000 minibatches of 512 samples (512000 examples), whereas every 346 iterations we train 346 batches of 128 samples, 44288 examples.
\item MC-RCFR uses standard outcome sampling rather than Linear CFR~\cite{BrownLinearCFR}.
\item MC-RCFR's strategy is approximated by predicting the OS's average strategy increment rather than sampling from a buffer of previous models.
\end{enumerate}
Our NFSP also does not use any extra enhancements.

\section{Initial Investigation of \algname in the Atari Learning Environment}
\label{sec:single-agent-atari}

While performance on Atari is not the main contribution, it should be treated as a health check of the algorithm. Unlike previously tested multiplayer games, many Atari games have a long term credit assignment problem. Some of them, like Montezuma's Revenge, are well-known hard exploration problems. It is interesting to see that \algname was able to consistently score 2500 points on Montezuma's Revenge despite not using any auxiliary rewards, demonstrations, or distributional RL as critic. We hypothesize that regret matching may be advantageous for exploration, as it provides naturally stochastic policies which stay stochastic until regrets for other actions becomes negative. We also tested the algorithm on Breakout, as it is a fine control problem. We are not claiming that out results on Atari are state of art - they should be interpreted as a basic sanity check showing that \algname could in principle work in this domain.

\begin{figure}[htbp]
\centering
\begin{subfigure}{0.5\textwidth}
  \centering
  \includegraphics[width=1.0\linewidth]{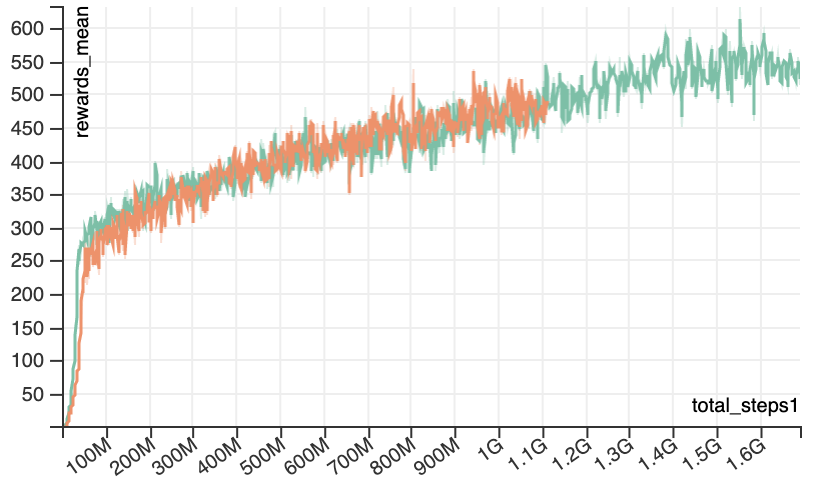}
  \caption{Breakout}
\end{subfigure}%
\begin{subfigure}{0.5\textwidth}
  \centering
  \includegraphics[width=1.0\linewidth]{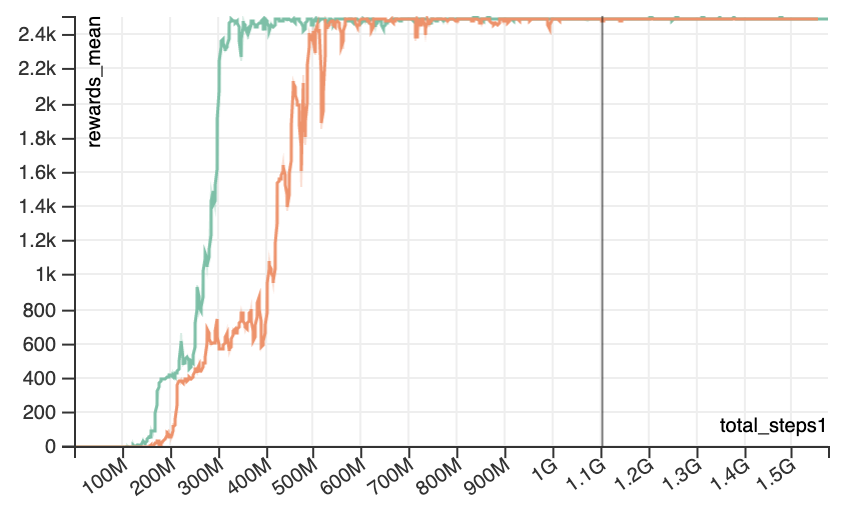}
  \caption{Montezuma Revenge}
\end{subfigure}%
\caption{Performance on Breakout (left) and Montezuma Revenge (right). Results are shown for two seeds.}
\label{fig:atari}
\end{figure}

\section{Training}
\label{sec:training}

Training is done by processing a batch of 64 of trajectories of length 32 at a time. In order to implement a full recall, all unfinished episodes will be continued on the next training iteration by propagating recurrent network states forward. Each time when one episode finishes at a particular batch entry, a new one is sampled and started to be unrolled from the beginning.

Adam optimized with $\beta_1=0.0$ and $\beta_2=0.999$ was used for optimization. Hyperparameter selection was done by trying only two learning rates: $5 \cdot 10^{-5}$ and $2 \cdot 10^{-4}$. The results reported use $5 \cdot 10^{-5}$ in all games, including Atari.

\section{Proof of Lemma \ref{lemma:w}}
\label{sec:proof}

\begin{proof}
First, let us notice that 
\begin{eqnarray}
W_i^T(s,a)&=&  \sum_{h\in s}\frac{\sum_{t=1}^T \eta^{\pi^t}(h)}{\sum_{t=1}^T \eta^{\pi^t}(s)} A_{\pi^t, i}(h,a),\\
&=&  \sum_{h\in s}\frac{\sum_{t=1}^T \eta^{\pi^t}_{-i}(h)}{\sum_{t=1}^T \eta^{\pi^t}_{-i}(s)} A_{\pi^t, i}(h,a)\\
&=& \frac{1}{w^T(s)} \sum_{t=1}^T \sum_{h\in s} \eta^{\pi^t}_{-i}(h) A_{\pi^t, i}(h,a),
\end{eqnarray}
where we used the perfect recall assumption in the first derivation, and we define $w^T(s) =\sum_t\eta^{\pi^t}_{-i}(s)$. Notice that $w^T(s)$ depends on the state only (and not on $h$).
Now the cumulative regret is:
\begin{eqnarray*}
R_i^{T}(s,a)&=&\sum_{t=1}^K q_{\pi^t,i}^c(s, a) - v_{\pi^t,i}^c(s)\\
&=& \sum_{t=1}^T \eta^{\pi^t}_{-i}(s) \big( q_{\pi^t,i}(s, a) - v_{\pi^k,i}(s) \big)\\
&=& \sum_{t=1}^T \eta^{\pi^t}_{-i}(s) \sum_{h\in s} \frac{ \eta^{\pi^t}_{-i}(h)}{ \eta^{\pi^t}_{-i}(s)} \big( q_{\pi^t,i}(h, a) - v_{\pi^t,i}(h) \big)\\
&=& \sum_{t=1}^T
\sum_{h\in s} \eta^{\pi^t}_{-i}(h) A_{\pi^t,i}(h,a)\\
&=& w^T(s) W_i^T(s,a).
\end{eqnarray*}
Finally, noticing that regret matching is not impacted by multiplying the cumulative regret by a positive function of the state, we deduce \[
\frac{R_i^{T,+}(s,a)}{\sum_b R_i^{T,+}(s,b)} = \frac{\big(w^T(s)W_i^{T}(s,a)\big)^+}{\sum_b \big( w^T(s)W_i^{T}(s,b)\big)^+} =  \frac{W_i^{T,+}(s,a)}{\sum_b W_i^{T,+}(s,b)}.
\]
\end{proof}

\section{Unbiasedness of $\hat W^T_i(s,a)$}
\begin{lemma}\label{lemma:unbiased}
Consider the case of a tabular representation and define the estimate $\hat W^T_i(s,a)$ as the minimizer (over $W$) of the empirical loss defined over $N$ trajectories  
$$\hat {\cal L}_{(s,a)}(W)=\frac 1N\sum_{n=1}^N \big[ W - A_{\pi^{j_n},i}(h,a)\big]^2\1\{(h,a)\in \rho^{j_n} \mbox{ and } h\in s\},$$
where $\rho^{j_n}$ is the $n$-th trajectory generated by the policy $(\mu_i^T,\pi_{-i}^{j_n})$ where $j_n\sim {\cal U}(\{1,\dots, T\})$. Define $N(s,a) = \sum_{n=1}^N \1\{(h,a)\in \rho^{j_n}\mbox{ and } h\in s\}$ to be the number of trajectories going through $(s,a)$. Then $\hat W^T_i(s,a)$ is an unbiased estimate of $W^T_i(s,a)$ conditioned on $(s,a)$ being traversed at least once:
$$\E\big[ \hat W^T_i(s,a) | N(s,a)>0\big] = W^T_i(s,a).$$
\end{lemma}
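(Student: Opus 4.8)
The plan is to reduce the claim to a statement about the mean of an empirical average of regression targets, and then to evaluate that mean via a single-trajectory reach-probability computation. First I would note that for fixed data, $\hat{\cal L}_{(s,a)}(W)$ is a convex quadratic in the scalar $W$, so whenever $N(s,a)>0$ its unique minimizer is the empirical mean of the targets over the trajectories that actually reach $(s,a)$:
$$\hat W^T_i(s,a)=\frac{1}{N(s,a)}\sum_{n=1}^N A_{\pi^{j_n},i}(h,a)\,\1\{(h,a)\in\rho^{j_n}\text{ and }h\in s\}.$$
Here, for each contributing trajectory, $h$ is the \emph{unique} history of $s$ that $\rho^{j_n}$ visits: uniqueness holds because the histories of an information state form an antichain under the prefix order, while a trajectory is a linear path. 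So the lemma is now a claim about $\E[\hat W^T_i(s,a)\mid N(s,a)>0]$.

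Next I would compute the conditional expectation of a single contributing target. Let $B_n$ be the event that the $n$-th trajectory reaches some $h\in s$ and plays $a$. Conditioning on the opponent index $j_n=j$ (uniform on $\{1,\dots,T\}$), the trajectory reaches $h\in s$ and plays $a$ with probability $\eta^{(\mu_i^T,\pi_{-i}^{j})}(h)\,\mu_i^T(s,a)$, hence $\Pr(B_n\mid j_n=j)=\mu_i^T(s,a)\sum_{h\in s}\eta^{(\mu_i^T,\pi_{-i}^{j})}(h)=\mu_i^T(s,a)\,\eta^{(\mu_i^T,\pi_{-i}^{j})}(s)$. Averaging over $j$ and applying Bayes' rule, the conditional law of $(j_n,h)$ given $B_n$ assigns mass proportional to $\eta^{(\mu_i^T,\pi_{-i}^{j})}(h)$ — the factor $\mu_i^T(s,a)$ cancels, and is strictly positive because $N(s,a)>0$ forces $\mu_i^T(s,a)>0$ and $\sum_{j}\eta^{(\mu_i^T,\pi_{-i}^{j})}(s)>0$. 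Therefore
$$\E\!\big[A_{\pi^{j_n},i}(h,a)\mid B_n\big]=\frac{\tfrac1T\sum_{j=1}^T\sum_{h\in s}\eta^{(\mu_i^T,\pi_{-i}^{j})}(h)\,A_{\pi^{j},i}(h,a)}{\tfrac1T\sum_{j=1}^T\eta^{(\mu_i^T,\pi_{-i}^{j})}(s)}=W^T_i(s,a),$$
where the last equality is exactly the definition~\eqref{eq:W.def} after pushing the sum over $h$ through and using $\sum_{h\in s}\eta^{(\mu_i^T,\pi_{-i}^{j})}(h)=\eta^{(\mu_i^T,\pi_{-i}^{j})}(s)$. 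This also re-confirms the remark below \eqref{eq:W.def} that $W^T_i$ does not depend on $\mu_i^T$.

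Finally I would lift this to the empirical mean. Since the $N$ trajectories are i.i.d., conditioning on the membership pattern $\{n:B_n\text{ holds}\}$ — in particular on $N(s,a)=m$ for any $m\ge1$ — leaves the $m$ contributing targets i.i.d. with the conditional law analyzed above, each of mean $W^T_i(s,a)$; hence $\E[\hat W^T_i(s,a)\mid N(s,a)=m]=W^T_i(s,a)$, and averaging over $m\ge1$ by the tower property yields $\E[\hat W^T_i(s,a)\mid N(s,a)>0]=W^T_i(s,a)$. I expect the only delicate step to be this last one: justifying that conditioning on $N(s,a)=m$ keeps the selected subsample i.i.d. from the ``reaches $(s,a)$'' conditional distribution. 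This is the standard fact that an exchangeably-chosen sub-sequence of an i.i.d. sequence is conditionally i.i.d., but it should be spelled out; everything else is bookkeeping with reach probabilities and the cancellation of the $\mu_i^T(s,a)$ factor.
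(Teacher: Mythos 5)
Your proof is correct, and its skeleton matches the paper's: both first observe that the quadratic loss makes $\hat W^T_i(s,a)$ the empirical mean of the advantages over the trajectories that actually traverse $(s,a)$, and both then reduce the claim to the fact that each contributing target has conditional mean $W^T_i(s,a)$. Where you genuinely diverge is in how the random number of contributors is handled and in what is made explicit. The paper computes $\E\big[\hat W^T_i(s,a)\,\1\{\sum_m b_m>0\}\big]$ directly: it conditions on $\sum_m b_m$, factorizes $\E[A_n\mid \sum_m b_m]=\E[A_n\mid b_n]\,\E[b_n\mid\sum_m b_m]$, and then uses the symmetry identity $\E\big[b_n\1\{\sum_m b_m>0\}/\sum_m b_m\big]=\frac1N\,{\mathbb P}(\sum_m b_m>0)$ to make the random denominator disappear, finally asserting (without computation) that $\E[A_1\mid b_1]$ equals $W^T_i(s,a)$ of~\eqref{eq:W.def}. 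You instead condition on the full membership pattern $(b_1,\dots,b_N)$ and use that the pairs $(A_n,b_n)$ are i.i.d., so the selected targets are conditionally i.i.d.\ with law $A_n\mid b_n=1$; the tower property over patterns with at least one contributor then finishes the argument. Your flagged ``delicate step'' is indeed the right one to spell out, and conditioning on the whole pattern (rather than only on the count $N(s,a)=m$) is exactly how to make it immediate. Two things your route buys: the Bayes/reach-probability computation showing $\E[A_{\pi^{j_n},i}(h,a)\mid B_n]=W^T_i(s,a)$, including the cancellation of the factor $\mu_i^T(s,a)$ and the independence from the exploration policy, is carried out explicitly rather than asserted; and you justify the uniqueness of the history $h\in s$ on a contributing trajectory via perfect recall, which the paper uses tacitly. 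The paper's symmetry identity, for its part, avoids any discussion of conditional laws of subsamples, at the cost of a slightly more opaque manipulation of $b_n/\sum_m b_m$. Either argument is complete; yours is arguably the more transparent decomposition.
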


\begin{proof}
The empirical loss being quadratic, under the event $\{N(s,a)>0\}$, its minimum is well defined and reached for 
$$\hat W^T_i(s,a)=\frac {1}{N(s,a)}\sum_{n=1}^{N(s,a)} A_{\pi^{j_n},i}(h_n,a),$$
where $h_n\in s$ is the history of the $n$-th trajectory traversing $s$.
Let us use simplified notations and write $A_n=A_{\pi^{j_n},i}(h,a)\1\{(h,a)\in \rho^{j_n}\mbox{ and } h\in s\}$ and $b_n = \1\{(h,a)\in \rho^{j_n} \mbox{ and } h\in s\}.$ Thus
\begin{eqnarray*}
\E\Big[ \hat W^T_i(s,a) \1\Big\{ \sum_{m=1}^N b_m>0 \Big\} \Big] &=&\E\left[ \frac {\sum_{n=1}^N A_n \1\big\{ \sum_{m=1}^N b_m>0 \big\}}{ \sum_{m=1}^N b_m}\right]\\
&=&\sum_{n=1}^N \E\left[ \E\left[ \frac { A_n \1\big\{ \sum_{m=1}^N b_m>0 \big\}}{\sum_{m=1}^N b_m}\Big| \sum_{m=1}^N b_m\right] \right]\\
&=&\sum_{n=1}^N \E\left[ \E\Big[ A_n \Big| \sum_{m=1}^N b_m\Big]\frac {  \1\big\{ \sum_{m=1}^N b_m>0 \big\}}{\sum_{m=1}^N b_m} \right]\\.
\end{eqnarray*}
Now, $\E\big[ A_n \big| \sum_{m=1}^N b_m\big]  = \E\big[ A_n | b_n \big] \E\big[ b_n | \sum_{m=1}^N b_m\big]$ since given $b_n$, $A_n$ is independent of $\sum_{m=1}^N b_m$. Thus
\begin{eqnarray*}
\E\Big[ \hat W^T_i(s,a)\1\Big\{ \sum_{m=1}^N b_m>0 \Big\}\Big] &=& \sum_{n=1}^N \E\big[ A_n | b_n \big] \E\left[ \E\Big[ \frac { \E\big[b_n\big| \sum_{m=1}^N b_m\big] \1\big\{ \sum_{m=1}^N b_m>0 \big\}}{\sum_{m=1}^N b_m} \right]\\
&=& \sum_{n=1}^N \E\big[ A_n | b_n \big] \E\left[ \frac { b_n \1\big\{ \sum_{m=1}^N b_m>0 \big\}}{\sum_{m=1}^N b_m} \right]
\end{eqnarray*}
Since $\sum_{n=1}^N \E\Big[ \frac {b_n \1\big\{ \sum_{m=1}^N b_m>0 \big\}}{\sum_{m=1}^N b_m}\Big] = \E\Big[ \frac{\sum_{n=1}^N b_n}{\sum_{m=1}^N b_m}\1\big\{ \sum_{m=1}^N b_m>0 \big\}\Big]={\mathbb P}\big(\sum_{m=1}^N b_m>0 \big)$, by a symmetry argument we deduce $\E\Big[ \frac {b_n \1\big\{ \sum_{m=1}^N b_m>0 \big\}}{\sum_{m=1}^N b_m}\Big] = \frac 1N {\mathbb P}\big(\sum_{m=1}^N b_m>0 \big)$ for each $n$. Thus
\begin{eqnarray*}
\E\Big[\hat W^T_i(s,a) \Big| N(s,a)>0 \Big]&=& 
\E\Big[\hat W^T_i(s,a) \Big| \sum_{m=1}^N b_m>0 \Big] \\
&=&\frac{\E\Big[\hat W^T_i(s,a) \1\Big\{ \sum_{m=1}^N b_m>0 \Big\}\Big]}{{\mathbb P}\big(\sum_{m=1}^N b_m>0 \big)}\\
&=&\frac 1N \sum_{n=1}^N \E[A_n|b_n] = \E[A_1|b_1]
\end{eqnarray*}
which is the expectation of the advantage $A_{\pi^j,i}(h,a)$ conditioned on the trajectory $\rho^j$ going through $h\in s$, i.e.~$W^T_i(s,a)$ as defined in~\eqref{eq:W.def}. 
\end{proof}

%\section{Adaptive Policy Selection}
% \label{app:adaptive-policy-selection}

\section{Neural Network Architecture}
\label{sec:net_arch_2}

The following recurrent neural network was used for no-limit Texas Hold'em experiments. Two separate recurrent networks with shared parameters were used, consuming observations of each player respectively. Each of those networks consisted of a single linear layer mapping input representation to a vector of size 256. This was followed by a double rectified linear unit, producing a representation of size 512 then followed by LSTM with 256 hidden units. This produced an information state representation for each player $a_{0}$ and $a_{1}$.

Define architecture $B(x)$, which will be reused several times. It consumes one of the information state representations produced by the previously mentioned RNN: $h_1 = Linear(128)(x)$, $h_2 = DoubleReLU(h_1)$, $h_3 = h1 + Linear(128)(h2)$, $B(a) = DoubleReLU(h_3)$.

The immediate regret head is formed by applying $B(s)$ on the information state representation followed by a single linear layer of the size of the number of actions in the game. The same is done for an average regret head and mean policy head. All those $B(s)$ do not share weights between themselves, but share weights with respective heads for another player.

The global critic $q(h)$ is defined in the following way.   $n_A = Linear(128)$, $n_B = Linear(128)$, $a_0 = n_A(s_0) + n_B(s_1)$, $a_0 = n_B(s_0) + n_A(s_1)$, $h_1=Concat(a_0, a_1)$, $h_2 = B(h_1)$ and finally $q_0(s_1, s_2)$ and $q_1(s_1, s_2)$ are evaluated by a two linear layers on top of $h_2$. $B(x)$ shares architecture but does not share parameters with the ones used previously.

\end{document}